                              \institute{ }
\titlerunning{Compiling TM to SMM}
\newtheorem{thesis}{Thesis}
\author{J.-M. Chauvet}
\date{\today}
\title{Multiway Storage Modification Machines}
\begin{document}

\maketitle
\begin{abstract}
We present a parallel version of Schönhage's Storage Modification Machine, the Multiway Storage Modification Machine (\emph{MWSMM}). Like the alternative Association Storage Modification Machine of Tromp and van Emde Boas, MWSMMs recognize in polynomial time what Turing Machines recognize in polynomial space. Falling thus into the Second Machine Class, the MWSMM is a parallel machine model conforming to the parallel computation thesis. We illustrate MWSMMs by a simple implementation of Wolfram's String Substitution System.
\end{abstract}

\section{Introduction}
\label{sec:orga45f054}
The Storage Modification Machine (\emph{SMM}) is a machine model introduced by Schönhage in 1980 \cite{Schoenhage1980}. A somewhat similar computation model was previously discussed by Колмогоров and Успенский \cite{ref21} in 1957, the Kolmogorov-Uspensky Model (\emph{KUM}). A SMM represents a single computing agent. It is equipped with memory and a processing unit. Its memory stores a finite directed graph of equal out-degree nodes with a distinguished one called the \emph{center}. (Edges of this graph are also called \emph{pointers}.) Edges leading out of each node are uniquely labelled by distinct \emph{directions}, drawn from a finite set \(D\). In addition, like in the Random Access Machine (\emph{RAM}) and Random Access Stored Program (\emph{RASP}) models, a SMM has a stored program, or control list, consisting of instructions to operate on the directed graph by creating new nodes and (re)directing pointers.

Any string \(x \in D^*\) refers to the node \(p(x)\) reached from the center by following the sequence of directions labelled by \(x\). The restricted instruction set is as follows:

\begin{itemize}
\item \emph{\textbf{new} label} creates a new labelled node and makes it the center, setting all its outgoing edges to the previous center. (In another variant, the newly created node does not become the center.) Without a label specification, a unique ID is generated instead.
\item \emph{\textbf{set} xd \textbf{to} y} where \(x,y\) are paths in \(D^*\) and \(d \in D\) is a direction, redirects the \(d\) edge of \(p(x)\) to point to \(p(y)\).
\item \emph{\textbf{center} x} where \(x\) is a path, moves the center to \(p(x)\).
\item \emph{\textbf{if} x y \textbf{then} ln} where \(x,y\) are paths and \emph{ln} a line number, jumps to line \emph{ln} if \(p(x) = p(y)\) and skips to the next line if not. Line numbers can be absolute, \emph{ln}, or relative to the current line number, \emph{+rel} or \emph{-rel}.
\item \emph{\textbf{stop} message} halts the SMM, printing out \emph{message}.
\end{itemize}

It has been established that from the perspective of computational complexity theory the SMM (if equipped with the correct space measure, see \cite{VANEMDEBOAS1989103} and \cite{LUGINBUHL1993} for a review) is computationally equivalent to the other standard sequential machine models like the Turing Machine and the RAM models. In particular, SMM and Turing Machines can simulate each other \cite{chauvet21:_compil_turin_machin_storag_modif_machin}. These equivalences are indicative of the \emph{Invariance Thesis} for sequential machine models \cite{SLOT1988}:

\begin{thesis}[Invariance Thesis]
There exists a standard class of machine models, which includes among others all variants of Turing machines, all variants of RAMs and RASPs with logarithmic time and space measures, and also the RAMs and RASPs in the uniform time and logarithmic space measure, provided only standard arithmetical instructions of additive type are used. Machine models in this class simulate each other with polynomially bounded overhead in time and constant factor overhead in space.
\label{orgd09e194}
\end{thesis}

As remarked by Slot and van Emde Boas \cite{SLOT1988} "\emph{the thesis becomes a guiding rule for specifying the right class of models rather than an absolute truth,}" and is used to define so-called \emph{reasonable machines}. For the SMM time measure, the only proper candidate is the uniform time measure. (Even if each instruction is charged by the lengths of the paths it involves, these lengths being independent from the current graph structure at runtime, the weight would differ from the uniform by no more than a constant factor determined by the longest paths in the program.) For the SMM space measure, Schönhage's estimates of the number of graphs of \(n\) nodes over an alphabet of directions of size \(d\):

\begin{equation}
n^{nd-n+1} \leq N_{n,d} \leq \binom{nd}{n} \frac{nd-n+1}{nd+1}
\end{equation}

are suggestive that: on \(n\) nodes of a SMM, one can encode \(\mathcal{O}(nd\log{}n)\) bits, compared to the \(\mathcal{O}(n\log{}d)\) bits on \(n\) tape cells of a TM with the same size \(d\) alphabet. This result was strengthened by Lunginbuhl and Loui in \cite{LUGINBUHL1993} where a real-time simulation of a TM of space complexity \(s\) by a SMM using \(\mathcal{O}(s/{\log{}s})\) nodes is presented. Two notions of space measures are discussed: the \emph{mass} as the number of \texttt{new} instructions executed before halting, i.e. the number of nodes created during execution--which may be greater than the number of nodes reachable from the center-- and the \emph{capacity} as \(dn\log{}n\), where \(d\) is the size of the set of directions (number of pointers) and \(n\) the number of nodes. Based on these original observations, the SMM can be simulated with constant-factor space overhead on a TM and vice versa.

For most sequential models, \emph{parallel machine models} have been proposed based on the classical sequential version.

\begin{center}
\begin{tabular}{lll}
Sequential model & Parallel Model & Reference\\
\hline
TM & Recursive branching & Savitch W. J. \cite{savitch1977recursive}\\
TM & Forking & Wiedermann J. \cite{RUUCS8411}\\
RAM & Numerous variations & Survey: \cite{VANEMDEBOAS19901}\\
SMM & Associative SMM & Tromp J. and van Emde Boas P. \cite{DBLP:conf/dagstuhl/TrompB92}\\
\hline
\end{tabular}
\end{center}

Most parallel models in the table above, with the exception of Wiedermann's PTM for instance, belong to a \emph{Second Machine Class}, a term introduced by van Emde Boas to designate machine models obeying the \emph{Parallel Computation Thesis}.

\begin{thesis}[Parallel Computation Thesis]
Whatever can be solved in polynomially bounded space on a reasonable sequential machine model can be solved in polynomially bounded time on a reasonable parallel machine and vice versa.
\label{orge0fbb2c}
\end{thesis}

As \emph{reasonable} sequential machines, namely these that obey the Invariance Thesis, constitute the First Machine Class, \emph{reasonable} parallel machines, those complying with the Parallel Computation Thesis, constitute the Second Machine Class \cite{VANEMDEBOAS19901}.

As mentioned above, not all parallel models follow the Parallel Computation Thesis, so that this thesis only represents a frequently occurring version of the uniform parallelism power but, according to van Emde Boas, one that strikes the right balance between "\emph{exponential growth capability and proper degree of uniformity.}"

In this paper we propose an alternate parallel version of the SMM to Tromp and van Emde Boas Associative Storage Modification Machine (\emph{ASMM}), called Multiway Storage Modification Machine (\emph{MWSMM}). The computational power of the MWSMM originates from a new instruction, \texttt{match}, triggering a multi-centered updating process of the graph structure. Like the ASMM, the MWSMM model may be considered to be a member of the class of sequential machines which operate on large objects in unit time and obtain their power of parallelism thereof. A simple deterministic updating process for the MWSMM is shown to locate this model into the Second Machine Class.

\section{Multiway Storage Modification Machines}
\label{sec:org13a2230}
In this section we propose a parallel version of the SMM model as an alternative to the ASMM of \cite{DBLP:conf/dagstuhl/TrompB92}. Its source of inspiration comes from  \emph{Production Systems} an Artificial Intelligence architecture \cite{Waterman:78} which consists primarily of a set of rules about behavior, but also includes the mechanism necessary to follow those rules as the system responds to changes of state. Those rules, termed productions, are a basic knowledge representation scheme found useful in automated planning, expert systems and action selection tasks.

Along these lines we augment the sequential SMM instruction set with one control:
\begin{itemize}
\item \emph{\textbf{match} bool-expr} where \emph{bool-expr} is a boolean expression on paths. This instruction works like a production. It selects any node in the graph which, were it the center, would make the boolean expression true. The instructions that follow the \texttt{match} are then executed in parallel on all nodes thus selected, as if each one of the selected node was the center.
\end{itemize}

The \texttt{match} control semantics is similar to the \emph{recognize-act cycle} typically found in production and rule-based systems. Candidate center nodes are selected that set to true the boolean expression, and the rest of the program is executed on all nodes as if they were the center. The principle is similar to \emph{forking} the sequential execution of the program but starting from several nodes rather than from a single center.

\begin{figure}[htbp]
\centering
\includegraphics[width=.9\linewidth]{./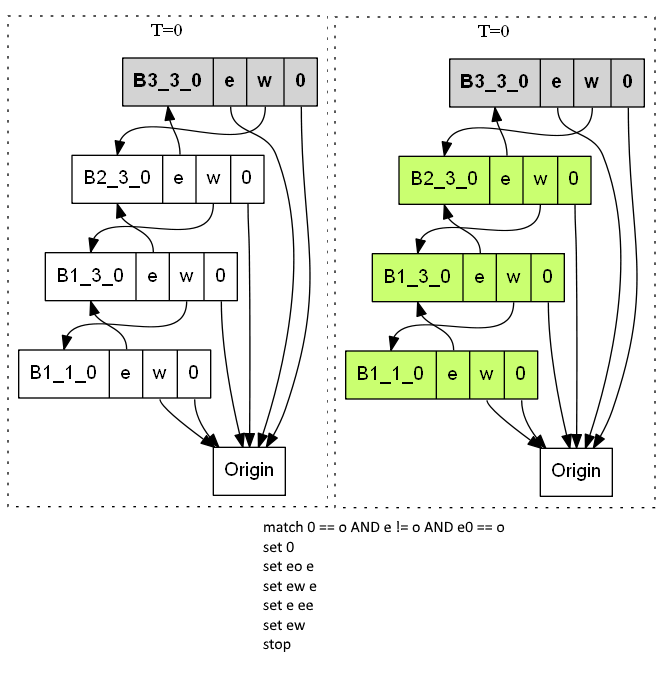}
\caption[\texttt{match}]{\label{fig:org487647b}Top left, an example graph of a SMM before a \texttt{match} instruction. Top right the selected nodes, in green, after selection by the \texttt{match} instruction stated in the code fragment at bottom. The SMM center is colored gray and in bold.}
\end{figure}

In Figure \ref{fig:org487647b} we show the selection operated by a \texttt{match} instruction. In this example machine, three nodes end up selected and the  six instructions following \texttt{match} are executed in parallel with the three selected nodes considered as center.

\subsection{The Structure of a Multiway SMM program}
\label{sec:orgaf46231}
Parallelism is introduced in the MWSMM at each \texttt{match} instruction of its program. The execution forks into parallel execution of the rest of the instructions following the \texttt{match}, or stops if no node is found to match the boolean expression. The rest of the instructions may itself contain further \texttt{match} instructions, triggering another forking of parallel executions, or \texttt{stop} instructions terminating the forked execution (as in Figure \ref{fig:org487647b}). Note that the execution is deterministic as all other instructions always denote one node.

More generally a MWSMM program is segmented in a \emph{prologue} control list followed by one or several \emph{match-blocks} containing control lists ending on a \texttt{stop} instructions. The prologue control list builds up the initial graph of the SMM; it is usually executed once at start-time, and may be considered similar to a global data segment in an assembly program. A distinguished list of match-blocks are specified as \emph{top-level} in the invocation of the SMM. The top-level match-blocks are executed in parallel during a single \emph{run} of the SMM. If none of the top-level \texttt{match} statements select one or several nodes, the run terminates.

\begin{figure}[htbp]
\centering
\includegraphics[width=.9\linewidth]{./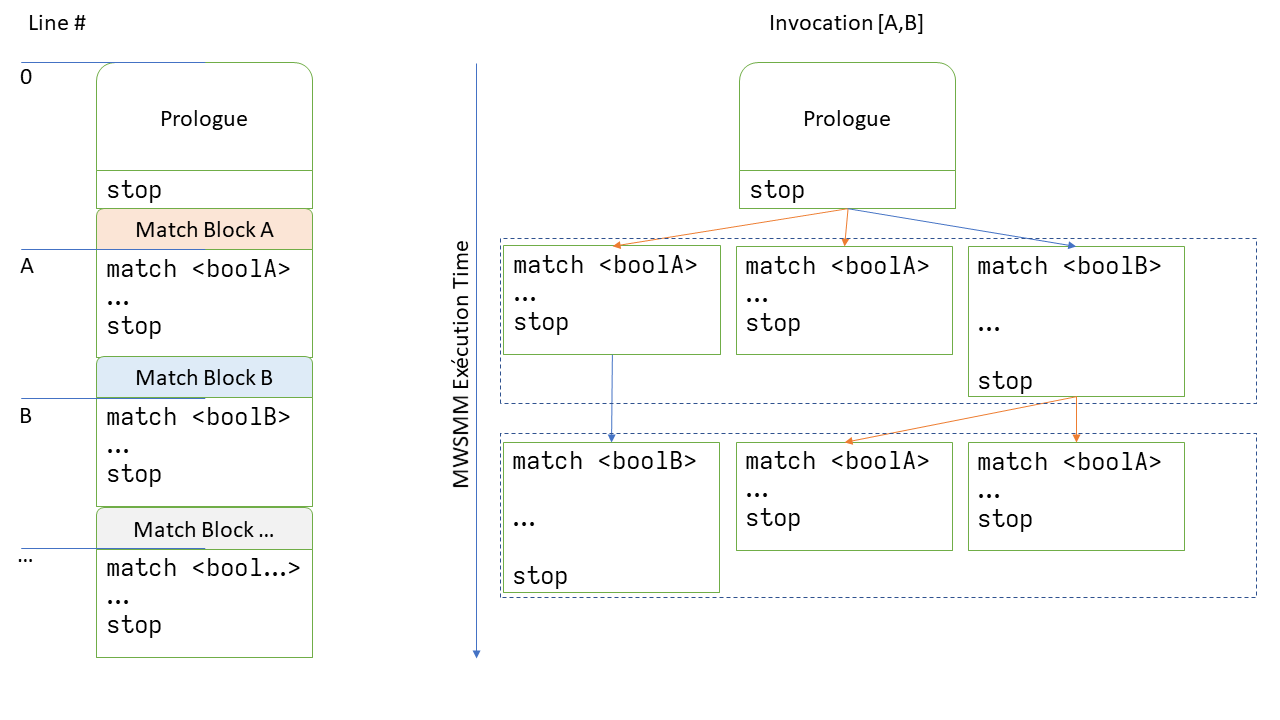}
\caption[\emph{B}]{\label{fig:orgd3f53db}The structure of a MWSMM program. Left the program as a sequentially numbered list of statements delimiting a prologue block and one or several match blocks \emph{A}, \emph{B}\ldots{} Right: at runtime, on invocation with top-level blocks [A, B] the prologue is executed first and match-blocks \emph{A} and \emph{B} are executed in parallel, e.g. twice for \emph{A} when two nodes match the \emph{A} boolean expression immediately after the prologue and once for \emph{B} as one node matches the \emph{B} boolean expression immediately after the prologue. From the graph resulting from the parallel executions, new nodes may enter the selection for \emph{A} and \emph{B} boolean expressions, which trigger another parallel execution of the corresponding blocks, until selection is empty for all top-level blocks.}
\end{figure}
\newpage

\subsection{MWSMMs obey the Parallel Computation Thesis}
\label{sec:org9389f43}
We sketch the proof of the membership of MWSMMs into the Second Machine Class the usual way, using the \emph{Quantified Boolean Formula} (QBF) satisfaction problem as a representative PSPACE-problem. The proof is in two parts \cite{VANEMDEBOAS19901}:

\begin{lemma}[$NPTIME(MWSMM) \subseteq PSPACE$]
The language recognized by a non-deterministic MWSMM in polynomial time is recognized by the standard model of off-line multi-tape TM in polynomial space.
\label{orgfeac9ff}
\end{lemma}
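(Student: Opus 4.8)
The plan is to exploit the standard relationship between parallel time and sequential space: although a \texttt{match} can fork the computation into exponentially many parallel threads, the \emph{depth} of the resulting computation tree is bounded by the running time. Fix a MWSMM $M$ accepting in time $t(n) = n^{O(1)}$. First I would observe that along any single thread of execution---any root-to-leaf path in the forking tree induced by the nested \texttt{match} instructions---at most $t(n)$ instructions are executed, so the tree has depth at most $t(n)$. The branching factor at a \texttt{match} node may be as large as the current number of graph nodes, hence the total number of threads and of nodes created may be exponential in $t(n)$; the point is that we never need all of them simultaneously.

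The simulation proceeds by a depth-first traversal of this computation tree on a multi-tape TM. I would maintain on a worktape a \emph{stack} encoding the current root-to-leaf path: each frame records the program counter, and for every \texttt{match} on the path the boolean expression together with an index selecting which matching node the current branch is exploring. Enumerating the selected nodes one index at a time---rather than materialising the whole selection---keeps each frame of polynomial size, and since the depth is at most $t(n)$ the stack is polynomially bounded. When a branch reaches a \texttt{stop}, the TM records whether it accepts, then backtracks and advances the index at the deepest \texttt{match}, reusing the space. The nondeterminism of $M$ is threaded through the same enumeration, so that an accepting leaf is found iff one exists; since the whole search is carried out within polynomial space, Savitch's theorem (PSPACE $=$ NPSPACE) guarantees the nondeterminism costs nothing.

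The hard part will be representing the graph, which may be exponentially large, within polynomial space: the TM cannot store it explicitly. The key idea is to recompute pointer values \emph{on demand}. To resolve a path expression $p(x)$ or test an equality $p(x) = p(y)$ needed by the instruction at the top of the stack, I would replay the relevant portion of the program deterministically, using the stack's match-indices to reproduce exactly the sequence of \texttt{new} and \texttt{set} operations that determined the node in question. Each node can be named by a polynomial-length ``address''---the list of instructions and match-choices that created it---and a pointer query is answered by a recursion of depth at most $t(n)$ over such addresses. Proving that this reconstruction stays within polynomial space, and that it faithfully reproduces the pointer the parallel execution would have seen, is the technical heart of the argument and the step I expect to be the main obstacle.

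Finally, I would assemble these pieces: the traversal visits every leaf of the (exponential) computation tree while storing only a polynomial-length path plus polynomial scratch space for pointer reconstruction, so the total space is $\mathrm{poly}(t(n)) = n^{O(1)}$. The TM accepts exactly when some leaf of $M$'s computation accepts, establishing that the language is in NPSPACE and hence, by Savitch, in PSPACE.
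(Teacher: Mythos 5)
Your overall strategy is the same one the paper (following Tromp and van Emde Boas) uses: serialize the exponentially branching parallel computation, keep only a polynomial-size record of the execution history, and answer pointer queries about the (possibly exponential) graph on demand by a recursion of depth $\mathcal{O}(t)$ with polynomial-size frames, handling the nondeterminism by enumeration or Savitch. The paper's three arrays \emph{instr}, \emph{nodes}, \emph{center} indexed by time, each entry of $t$ bits, play exactly the role of your stack of frames, and your ``address of a node $=$ the instructions and match-choices that created it'' is the same naming scheme that makes the $\mathcal{O}(t^2)$ bound work.

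There is, however, one concrete step in your write-up that would fail for this model. The threads spawned by a \texttt{match} are not independent: they all read and write the \emph{same} global graph, and acceptance is read off the final global configuration (in the paper's QBF machine, off a distinguished direction of the center after all blocks have quiesced), not off individual leaves. Consequently (i) a depth-first traversal that ``records whether [a leaf] accepts, then backtracks, reusing the space'' discards the \texttt{set} and \texttt{new} effects that sibling branches contribute to the graph and that later steps depend on, and (ii) your pointer reconstruction, which replays only ``the stack's match-indices'' along the current root-to-leaf path, cannot see a pointer that was last redirected by a sibling thread. The fix stays within your space budget but must be made explicit: a query for the value of direction $d$ of node $u$ at time $i$ must recursively search, over \emph{all} thread addresses (enumerated one at a time in polynomial space), for the latest write to $(u,d)$ before time $i$, resolving collisions by the machine's conflict-resolution rule; and the final verdict is obtained by one such query on the output direction of the center at time $t$, not by OR-ing leaf verdicts. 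This is precisely the point the paper's own sketch compresses into ``executed in sequence $n$ times'' --- without it your argument establishes the inclusion only for machines whose parallel branches never interact.
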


\begin{proof}[Sketch]
It is easy to see that \(t\) steps of a non-deterministic MWSMM can be generally simulated on a TM with \(\mathcal{O}(t^2)\) space. Following \cite{DBLP:conf/dagstuhl/TrompB92}, we define three arrays \emph{instr}, \emph{nodes} and \emph{center} which respectively hold the instruction, the number of nodes and the center of the MWSMM at successive steps (times) \(i = 0...t\). Each element of the array fits into \(t\) bits since the number of nodes at most doubles (when all nodes are a match) at each step; every node has a unique number.

As a result of a successful \texttt{match} control, following instructions in the match block are executed in sequence \(n\) times, where \(n \leq t\) is the number of matching nodes. Each sequence is started with \emph{center[i+1] = nodes[k]} where \(k\) is the unique number of the node in the matching set.

Individual instructions of the MWSMM are each simulated by simple transitions in the TM reading and writing data from these three arrays. The number of the TM steps for a given MWSMM instruction is bounded by the maximum path length in the MWSMM program. With the above arrays, the simulating TM requires only \(\mathcal{O}(t^2)\) space.
\end{proof}

\begin{lemma}[$PSPACE \subseteq PTIME(MWSMM)$]
The language recognized by the standard model of off-line multi-tape TM in polynomial space is recognized by a deterministic MWSMM in polynomial time.
\label{orgd3b582b}
\end{lemma}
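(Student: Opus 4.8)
The plan is to exploit the PSPACE-completeness of QBF: since every language in PSPACE reduces in deterministic polynomial time to the set of true quantified Boolean formulas, and since the MWSMM subsumes the sequential SMM --- which simulates a polynomial-time TM with only polynomial overhead --- it suffices to exhibit a deterministic MWSMM that decides QBF in polynomial time. On input $x$ for an arbitrary $L \in PSPACE$, the MWSMM would first run its sequential prologue instructions to compute the standard reduction of $x$ to a closed quantified formula
\[
\Psi_x \;=\; Q_1 x_1\, Q_2 x_2 \cdots Q_m x_m\; \phi(x_1,\dots,x_m),
\]
with $m = \mathrm{poly}(|x|)$, each $Q_i \in \{\exists,\forall\}$ and $\phi$ quantifier-free, such that $x \in L$ iff $\Psi_x$ is true. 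The whole difficulty then reduces to evaluating $\Psi_x$ in time polynomial in $m$, despite its $2^m$ truth assignments.

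Next I would use the \texttt{match} fork to build, in $m$ rounds of repeated doubling, the complete binary assignment tree of depth $m$ over the direction symbols $\{0,1\} \subseteq D$: starting from the root, each round matches the current frontier of leaves and, treating each leaf as a center, creates its two children recording the next variable's truth value, so the number of active centers doubles each round. After $m$ rounds every one of the $2^m$ leaves encodes, along its root-to-leaf path, one full assignment; because the number of forking rounds is only $m$, this costs polynomial time even though the graph grows exponentially --- precisely the space-for-time trade the Parallel Computation Thesis sanctions. A further \texttt{match} then selects all leaves simultaneously and, with the leaf as center, runs the fixed polynomial-length instruction segment that evaluates $\phi$ against the assignment read off its path, storing a truth bit at the leaf. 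Finally I would aggregate these bits up the tree in $m$ synchronized rounds: at level $i$ a \texttt{match} selects the internal nodes of that level as centers, each reading the two bits of its children and writing its own bit as their conjunction if $Q_i = \forall$ or disjunction if $Q_i = \exists$; after $m$ rounds the root holds the truth value of $\Psi_x$, on which the machine accepts or rejects.

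The main obstacle I expect is the deterministic coordination of the $2^m$ parallel threads, so that the \texttt{match} selections and the parallel \texttt{set} writes remain conflict-free and correctly synchronized by level. Two points need care. First, the boolean path expression driving each \texttt{match} must isolate exactly the current frontier, or exactly the internal nodes of one level: I would maintain a distinguished \emph{active} marker --- for instance a pointer along a reserved direction to a sentinel node --- on precisely the nodes to be processed, refreshing it as the rounds advance, so that "being on the current level" becomes expressible as equality of two paths. Second, the upward aggregation must avoid write races: by making each \emph{parent} the center that reads down to its two children and writes only its own bit, rather than having children write up into a shared parent, every parallel thread writes to a distinct node and the update is well defined. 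Verifying that these conventions keep each round's work polynomial and the total number of rounds $\mathcal{O}(m)$ --- hence the whole simulation polynomial-time --- is then routine, completing the inclusion $PSPACE \subseteq PTIME(MWSMM)$ and, with Lemma~\ref{orgfeac9ff}, the membership of MWSMMs in the Second Machine Class.
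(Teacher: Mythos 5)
Your proposal is correct and follows essentially the same route as the paper's proof: both reduce to the QBF-satisfaction problem, use the \texttt{match}-driven doubling to build a complete binary tree with $2^n$ leaves encoding all truth assignments, evaluate the quantifier-free matrix in parallel at the leaves, and propagate truth values bottom-up with conjunction at $\forall$ levels and disjunction at $\exists$ levels, using reserved marker directions to keep the rounds synchronized and conflict-free. The only cosmetic difference is that the paper materializes the matrix $\phi$ as an explicit \emph{and}/\emph{or}/\emph{not} operator subtree attached in parallel to each leaf and evaluates it node by node, whereas you evaluate $\phi$ inline as a fixed polynomial-length code segment at each leaf.
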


\begin{proof}[Sketch]
We give a MWSMM program solving the QBF PSPACE-problem in MWSMM polynomial time, exploiting the implicit parallel power of the \texttt{match} instruction.

We consider a QBF with its two constitutive parts: a portion containing only quantifiers \(Q_1x_1...Q_nx_n\) where the \(Q_i\) are quantifiers \(\exists\) or \(\forall\), and the \(x_i\) are the logical variables; another containing an unquantified boolean expression involving only the binary \emph{and} and \emph{or} operators or the unary negation \emph{not} on the \(x_i\) logical variables.

In the proposed construction, the prologue builds a tree representation of the quantifier portion of the QBF. The program then lists two match-blocks: one to root a tree representation of the formula part of the QBF (with \emph{and}, \emph{or} and \emph{not} nodes), in parallel at each one of the leaves of the tree built in the prologue; another to compute, in parallel, the result of the execution of the boolean formula on all \(2^n\) logical value combinations of the \(x_i\) variables and propagate bottom-up the final value of the QBF satisfaction. The existence of a solution to the QBF problem is read from one distinguished direction of the center when the MWSMM stops.

In the MWSMM graph, each node is \emph{typed} as it represents one of:
\begin{itemize}
\item a universal quantifier \(\forall\) in the prologue quantifier tree;
\item an existential quantifier \(\exists\) in the prologue quantifier tree;
\item a binary \emph{and} operator in the formula subtree;
\item a binary \emph{or} operator in the formula subtree;
\item a unary \emph{not} operator in the formula subtree;
\item a variable \(x_i\) in the formula subtree.
\end{itemize}

The type is classically encoded in several dedicated directions (\texttt{A}, \texttt{S}, and \texttt{N} respectively for \emph{all}, \emph{some} and \emph{not}), pointing either to self or to a constant origin node pointed to by a special direction (\texttt{o}) in every node. MWSMM nodes keep track of their position in the tree and subtrees with three directions (\texttt{l}, \texttt{r} and \texttt{p} for left child, right child and parent). Finally each node has \(n\) directions, one for each variable, in which the combination of logical values of variables \(x_i\) to be tested is encoded the usual way (pointing to self for \texttt{False} or to the origin for \texttt{True}), and a value direction (\texttt{v}) encoding the node's own logical result value in the same way. 

Two ancillary "binary" directions (\texttt{x} and \texttt{i}) are used in the last evaluation match-block to indicate respectively that the node value is ready to be computed from the children node input values, and that the node is a leaf in the quantifier tree. The MWSMM then has \(10+n\) directions with \(n\) the number of variables in the QBF. (This number could be reduced by reusing directions to different purposes according to the type of the node.)

The length of the prologue is proportional to the number of variables \(n\) in the QBF, the graph it recursively builds has \(2^{n+1} - 1\) MWSMM nodes, \(2^n\) of which represent the leaves of the quantifier-part tree representation.

The length of the first match-block is proportional to the nesting depth \(d\) of the formula-part of the QBF and it recursively builds a subtree of \(2^{d+1} - 1\) MWSMM nodes attached in parallel to the above \(2^n\) leaves in the graph at the end of the prologue.

The length of the second match-block is proportional to the sum of the depth \(d\) and \(n\), the number of variables, as it propagates bottom-up the binary values, resulting from applying the logical operators, up all formula subtrees and the quantifier tree in parallel.

Overall the resolution MWSMM time is proportional to the lengths of prologue and match blocks, i.e. on the total depth of the QBF, \(d+n\). This polynomial bound is due to the parallel execution of the QBF formula on all combinations of logical values for variables. The exhaustive combinations are  built in the first match-block and evaluated in the second one.
\end{proof}

\begin{figure}[htbp]
\centering
\includegraphics[width=.9\linewidth]{./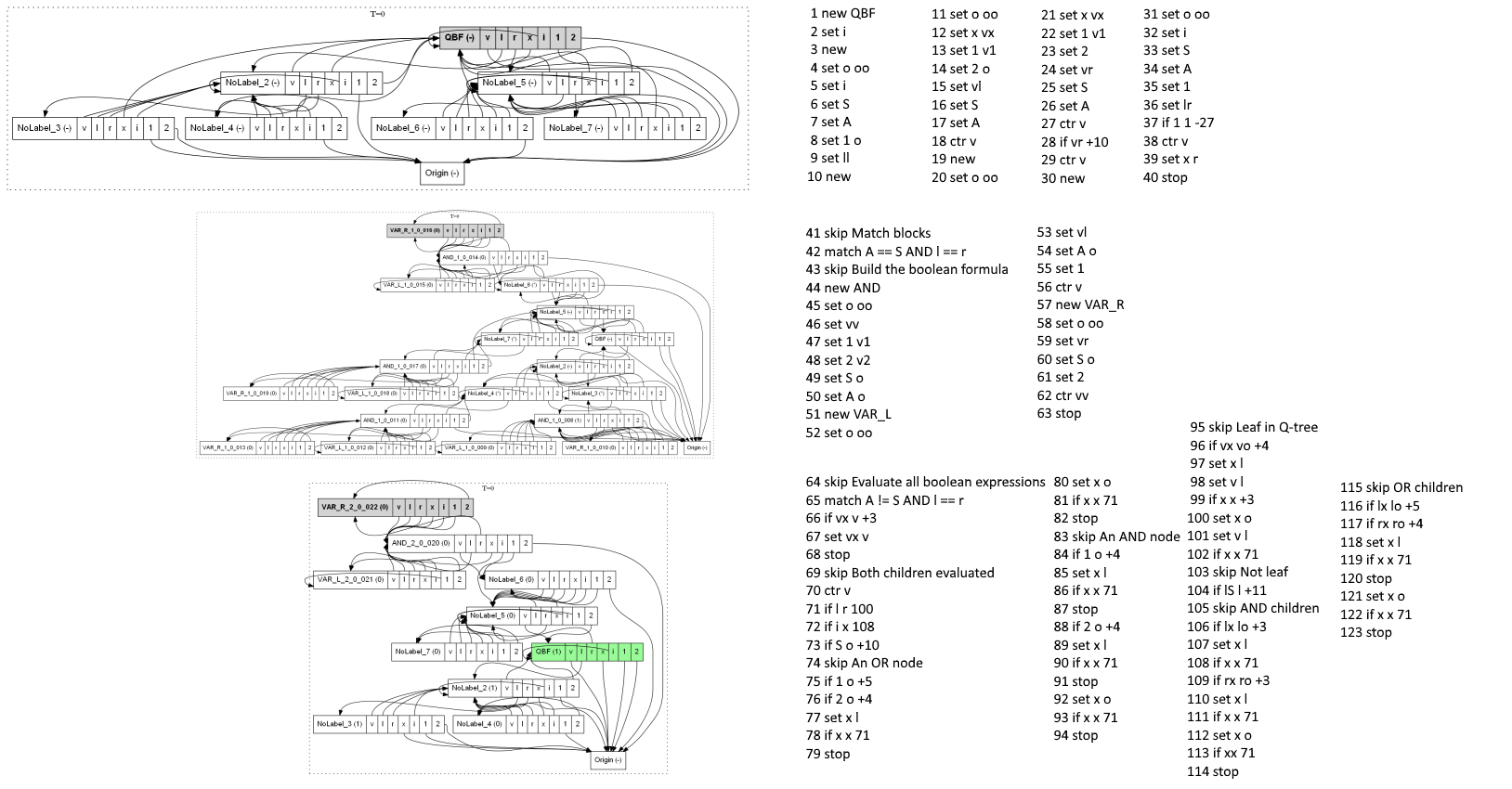}
\caption{\label{fig:orgcdefde6}Example MWSMM-run for the QBF-satisfaction problem on the simple formula \(\exists x_1 \exists x_2 (AND x_1 x_2)\). Top: MWSMM graph, left, after the prologue line 1 to 40, right, is run. Middle: MWSMM graph, left, after the first match-block line 41 to 63, is run in parallel. Bottom: MWSMM graph, left, after the second match-block line 64 to 123 is run. The final QBF state, in green, is read in direction \texttt{v} of the root \emph{QBF} labelled node, here \emph{1} i.e. \texttt{True}. (Center is highlighted in gray and bold characters.) Note that \texttt{match} conditions are exclusive and that the last match-block (bottom) selected nodes are built by the first match-block (middle) so that it is executed after the first one. Each match-block is executed in parallel on all nodes it selects as local centers. Here then the prologue is executed once, and each match-block is executed in parallel, one after the other; all blocks have MWSMM instruction lengths proportional to at most \(n+d\), with \(n\) the number of QBF variables and \(d\) the depth of the QBF formula. Maximum space extension is reached at the end of the execution of the first match-block (middle). When the machine halts, the execution of the last match-block (bottom) has made some nodes unreachable from the center.}
\end{figure}
\newpage

\section{String Substitution Systems}
\label{sec:org1e33a79}
In the previous section we considered a MWSMM with a deterministic updating process comparable to the inference engine found in the early \emph{expert system} AI architecture. Each match-block in the MWSMM program is handled as a rule in a production system. With this updating process the MWSMM falls into the Second Machine Class.

\subsection{Non-Deterministic Updating Process}
\label{sec:orgc97e656}
Following the lines of the Hardware Modification Machine (\emph{HMM}) introduced in \cite{Dymond1986a} we now look into a non-deterministic updating process by considering each match-block sequence of instruction as the program of an individual MWSMM. The current graph is its initial configuration and its center is preset to one of the matching nodes. Like the HMM, the MWSMM can be thought of as a collection of multiple-input finite-state transducers, except that (i) interconnections between units are rearranged by the implicit sequence of match-block executions and (ii) new units are activated as the computation proceeds after selection by the \texttt{match} instruction.

In a non-deterministic MWSMM there are often multiple ways in which a given \texttt{match}  can succeed and multiple ways in which blocks are executed, as in the case of several successful matches from different match-blocks in the same program. Taking our hints from \cite{wolfram2020project} we consider in this section the simple case of String Substitution Systems. These involve strings of a given alphabet whose elements are repeatedly replaced according to a finite set of rules.

A String Substitution System is simulated by a MWSMM where the string is represented by a doubly-linked list of nodes (using conventionally the \texttt{e} and \texttt{w} directions, \emph{east} and \emph{west}) and an encoding of the character on \(\log{}n\) special directions for an alphabet of size \emph{n}, each of which pointing either to self or to a fixed origin node. Each rule then appears as a match-block in the MWSMM program, which first selects where to apply the transformation and then executes it.

Let us consider, for instance, the first example in Chapter 5 of \cite{wolfram2020project}, the substitution system with rules \((A \rightarrow BBB, BB \rightarrow A)\), starting with the string \emph{A}.
\newpage

\begin{table}
\scriptsize
\begin{center}
\begin{tabular}{lll}
Prologue & \(A \rightarrow BBB\) & \(BB \rightarrow A\)\\
\hline
1 \texttt{skip A->BBB, BB->A; A} & 5 \texttt{match 0 == \_} & 23 \texttt{match 0 == o AND e != o AND e0 == o}\\
2  \texttt{new A} & 6  \texttt{ren B} & 24  \texttt{ren A}\\
3  \texttt{set 0} & 7  \texttt{set 0 o} & 25  \texttt{set 0}\\
4  \texttt{stop} & 8  \texttt{new B} & 26  \texttt{set ew e}\\
 & 9  \texttt{set o oo} & 27  \texttt{set e ee}\\
 & 10  \texttt{set 0 o} & 28  \texttt{set ew}\\
 & 11  \texttt{set e we} & 29  \texttt{stop}\\
 & 12  \texttt{set we} & \\
 & 13  \texttt{if e o +2} & \\
 & 14  \texttt{set ew} & \\
 & 15  \texttt{new B} & \\
 & 16  \texttt{set o oo} & \\
 & 17  \texttt{set 0 o} & \\
 & 18  \texttt{set e we} & \\
 & 19  \texttt{set we} & \\
 & 20  \texttt{if e o +2} & \\
 & 21  \texttt{set ew} & \\
 & 22  \texttt{stop} & \\
\end{tabular}
\end{center}
\caption[\texttt{ren}]{\label{org085e779}The alphabet is  \(\{A, B\}\), we need a single direction, \texttt{0}, to discriminate an \emph{A} or a \emph{B}. The prologue block, line 1-4, creates the initial string \emph{A} (setting \texttt{0} to self). The first rule, line 5-22, applies to any \emph{A}, here identified by any node in the graph, with direction \texttt{0} pointing to self (while \emph{B} would be indicated by pointing to the origin). The second rule, line 3-29, recognizes the substring \emph{BB} by checking that the current direction \texttt{0}  points to the origin (always pointed to by the reserved \texttt{o} direction) and that the eastern-neighbor is also a \emph{B}. The simulation-specific \texttt{ren} instruction simply changes the label of the node for output readability.}
\end{table}

Table \ref{org085e779} shows how each rule of the string substitution system is coded into a separate match-block while the initial string is built by the prologue of this MWSMM program. Using the non-deterministic updating process and running the MWSMM for three steps yield two possible states, both presented in Figure \ref{fig:orgb037669}. These are the leaves of the evolution tree, which retraces all possible substitutions at each step, as presented in \ref{fig:org8ebf724}.

\begin{figure}[htbp]
\centering
\includegraphics[width=.9\linewidth]{./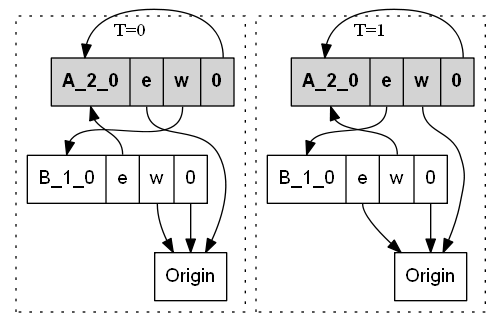}
\caption[\emph{BA}]{\label{fig:orgb037669}The two MWSMM graphs resulting from executing 3 steps of the string substitution system \((A \rightarrow BBB, BB \rightarrow A)\) on the initial string \emph{A}. They read: \emph{AB} and \emph{BA} respectively.}
\end{figure}

\begin{figure}[htbp]
\centering
\includegraphics[width=5cm]{./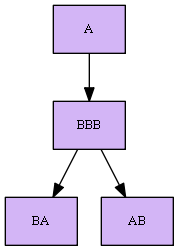}
\caption{\label{fig:org8ebf724}The natural representation of a the evolution of the string substitution system as a multiway system in which there can be multiple outcomes at each step and there are multiple paths of evolution (here shown after 3 steps).}
\end{figure}
\newpage

\subsection{Recovering the Phenomenon of Causal Invariance}
\label{sec:org2d4d310}
The MWSMM simulation recovers the causal invariance phenomenon: consider the string substitution system with the single rules \((BA \rightarrow AB)\) starting this time from \emph{BABABA}. The program for the simulation is presented in Table \ref{org0002c67}.

\begin{table}
\scriptsize
\begin{center}
\begin{tabular}{llll}
Prologue (1/3) & Prologue (2/3) & Prologue (3/3) & \(BA \rightarrow AB\)\\
\hline
1 \texttt{skip} & 10 \texttt{set o oo} & 19 \texttt{new B} & 30 \texttt{match 0 == o AND e != o AND e0 == e}\\
2 \texttt{new B} & 11 \texttt{set e we} & 20 \texttt{set o oo} & 31 \texttt{ren A}\\
3 \texttt{set 0 o} & 12 \texttt{set we} & 21 \texttt{set e we} & 32 \texttt{set 0}\\
4 \texttt{new A} & 13 \texttt{set 0 o} & 22 \texttt{set we} & 33 \texttt{ctr e}\\
5 \texttt{set o oo} & 14 \texttt{new A} & 23 \texttt{set 0 o} & 34 \texttt{ren B}\\
6 \texttt{set e we} & 15 \texttt{set o oo} & 24 \texttt{new A} & 35 \texttt{set 0 o}\\
7 \texttt{set we} & 16 \texttt{set e we} & 25 \texttt{set o oo} & 36 \texttt{stop}\\
8 \texttt{set 0} & 17 \texttt{set we} & 26 \texttt{set e we} & \\
9 \texttt{new B} & 18 \texttt{set 0} & 27 \texttt{set we} & \\
 & 28 \texttt{set 0} &  & \\
 & 29 \texttt{stop} &  & \\
\end{tabular}
\end{center}
\caption[\emph{B}]{\label{org0002c67}The alphabet is again \(\{A, B\}\). The prologue is a bit longer as it creates a string of alternating \emph{A} and \emph{B} symbols. The match-block for the single rule of this substitution system is however simpler here: we simply switch the encodings of the symbol at the center and its eastern neighbor.}
\end{table}

Running the simulation show that there are different possible paths through the evolution graph leading to different histories for the system, see Figure \ref{fig:orgf876720} left. But now all these paths converge to a final state. (The interpretation being that the rule is effectively sorting As in front of Bs by repeated permutations.) In the MWSMM implementation, this translates to all choices of selected nodes at each firing of the \texttt{match} instruction leading to the same MWSMM graph and center.

\begin{figure}[htbp]
\centering
\includegraphics[width=.9\linewidth]{./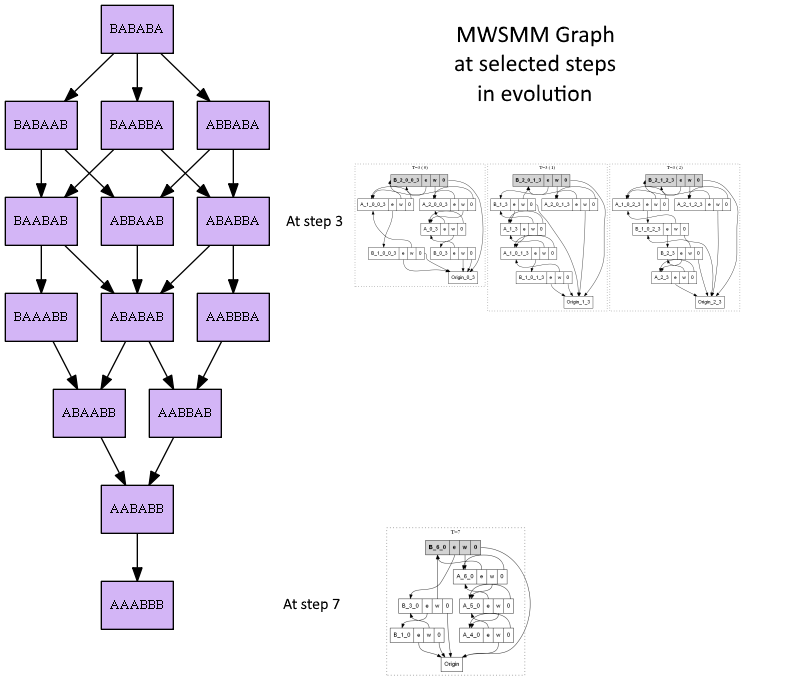}
\caption[\emph{7}]{\label{fig:orgf876720}An example string substitution system allowing match-block executions to be carried out in different sequences, generating different intermediate MWSMM graphs while always ending in the same final state. Such systems are called \emph{causal invariant} in \cite{wolfram2020project}, and totally causal invariant when the convergence happens whatever the initial string is. Left: the evolution tree starting from \emph{BABABA}. Right: Details of the possible intermediate MWSMMs at step \emph{3}, and, bottom, the single ending MWSMM at step \emph{7}.}
\end{figure}
\newpage

\section{Conclusions}
\label{sec:orga0559db}
The MWSMM is another parallel model based on Schönhage's Storage Modification Machine. In contrast to the ASMM which derives its power from the use of associative addressing \cite{DBLP:conf/dagstuhl/TrompB92} -- in which the first argument path of the \texttt{new} instruction may use \emph{reversed} directions so that it may create several nodes -- the MWSMM may execute instruction blocks in parallel starting from several distinct centers selected by the \texttt{match} instruction. As shown by Lam and Ruzzo \cite{10.1145/72935.72946}, when nodes are made individually equivalent to a finite automata the model is similar to a restricted version of the PRAM model of Fortune and Willye \cite{10.1145/800133.804339}. Other possibilities mentioned in \cite{DBLP:conf/dagstuhl/TrompB92} include enhancing the ASMM with a \texttt{set} instruction also having associative indexing capabilities, but the authors see the potential conflict resulting from the same node being addressed by two distinct paths in the \texttt{set} instruction as a major drawback. The MWSMM also confronts this possible conflict when updating its graph according to the deterministic process. Strategies for \emph{conflict-resolution}, also a technical feature characterizing the early AI inference engines, delineate different variants of the deterministic MWSMM model. Yet another possibility is to augment both arguments of the \texttt{set} instruction with associative indexing capability. This allows to parallelize the update of propositional variables in an arbitrarily large set. This is exactly what the \texttt{match bool-expr} offers as the selective boolean expression is not restricted in any particular way\footnote{(Listings for the MWSMM programs are on-line at \url{https://github.com/CRTandKDU/SMM}.)}.

Classification of machine models is an important line of research in Complexity Theory. Cook \cite{Cook1981,Dymond1989},  has classified the synchronous parallel models according to whether the interconnection among processors during a computation is fixed or variable. Van Emde Boas \cite{VANEMDEBOAS19901} distinguishes models that obey the \emph{Invariance Thesis}, the \emph{Parallel Computation Thesis} or none. While the MWSMM model would fall into Cook's "variable interconnection" class, only when the updating process is deterministic does it fall into van Emde Boas's Second Machine Class. A non-deterministic MWSMM, on the other hand, simulates a string substitution system traversing its evolution graph to recover phenomena like causal invariance \cite{wolfram2020project}. These results, however, illustrate Hong's notions of similarity and duality \cite{Hong1984} where all well-known models of computations are all similar in the sense that their parallel time, their sequential time, and their space complexities are simultaneously polynomially related.

\bibliographystyle{plain}
\bibliography{smm}

\begin{thebibliography}{10}

\bibitem{chauvet21:_compil_turin_machin_storag_modif_machin}
J.~M. Chauvet.
\newblock {Compiling Turing Machines into Storage Modification Machines}, 2021.

\bibitem{Cook1981}
S.~A. Cook.
\newblock Towards a complexity theory of synchronous parallel computation.
\newblock {\em {L'Enseignement Mathématique}}, XXVII, 1981.

\bibitem{Dymond1986a}
P.~W. Dymond.
\newblock On nondeterminism in parallel computation.
\newblock {\em Theoretical Computer Science}, 47, 1986.

\bibitem{Dymond1989}
P.~W. Dymond and S.~A. Cook.
\newblock Complexity theory of parallel time and hardware.
\newblock {\em Information and Computation}, 80, 1989.

\bibitem{10.1145/800133.804339}
Steven Fortune and James Wyllie.
\newblock {Parallelism in Random Access Machines}.
\newblock In {\em Proceedings of the Tenth Annual ACM Symposium on Theory of
  Computing}, STOC '78, page 114–118, New York, NY, USA, 1978. Association
  for Computing Machinery.

\bibitem{Hong1984}
J.~W. Hong.
\newblock Similarity and duality in computation.
\newblock {\em Information and Control}, 62, 1984.

\bibitem{RUUCS8411}
J.Wiedermann.
\newblock {Parallel Turing Machines}.
\newblock Technical Report RUU-CS-84-11, Department of Information and
  Computing Sciences, Utrecht University, 1984.

\bibitem{ref21}
A.~N. Kolmogorov and V.~A. Uspenskii.
\newblock {On the definition of an algorithm}, 1957.

\bibitem{10.1145/72935.72946}
T.~W. Lam and W.~L. Ruzzo.
\newblock {The Power of Parallel Pointer Manipulation}.
\newblock In {\em Proceedings of the First Annual ACM Symposium on Parallel
  Algorithms and Architectures}, SPAA '89, page 92–102, New York, NY, USA,
  1989. Association for Computing Machinery.

\bibitem{LUGINBUHL1993}
D.R. Luginbuhl and M.C. Loui.
\newblock {Hierarchies and Space Measures for Pointer Machines}.
\newblock {\em Information and Computation}, 104(2):253--270, 1993.

\bibitem{savitch1977recursive}
Walter~J Savitch.
\newblock {Recursive Turing Machines}.
\newblock {\em International Journal of Computer Mathematics}, 6(1):3--31,
  1977.

\bibitem{Schoenhage1980}
A.~Schönhage.
\newblock {Storage Modification Machines}.
\newblock {\em {SIAM Journal on Computing}}, 9(3):490--508, 1980.

\bibitem{SLOT1988}
Cees Slot and Peter {van Emde Boas}.
\newblock The problem of space invariance for sequential machines.
\newblock {\em Information and Computation}, 77(2):93--122, 1988.

\bibitem{DBLP:conf/dagstuhl/TrompB92}
John Tromp and Peter van Emde~Boas.
\newblock Associative storage modification machines.
\newblock In Klaus Ambos{-}Spies, Steven Homer, and Uwe Sch{\"{o}}ning,
  editors, {\em Complexity Theory: Current Research, Dagstuhl Workshop,
  February 2-8, 1992}, pages 291--313. Cambridge University Press, 1992.

\bibitem{VANEMDEBOAS1989103}
Peter {van Emde Boas}.
\newblock {Space measures for Storage Modification Machines}.
\newblock {\em Information Processing Letters}, 30(2):103--110, 1989.

\bibitem{VANEMDEBOAS19901}
Peter {van Emde Boas}.
\newblock Chapter 1 - {Machine Models and Simulations}.
\newblock In JAN {Van Leeuwen}, editor, {\em Algorithms and Complexity},
  Handbook of Theoretical Computer Science, pages 1--66. Elsevier, Amsterdam,
  1990.

\bibitem{Waterman:78}
D.A. Waterman and F.~Hayes-Roth.
\newblock {\em Pattern Directed Inference Systems}.
\newblock Academic Press, New York, 1978.

\bibitem{wolfram2020project}
S.~Wolfram.
\newblock {\em {A Project to Find the Fundamental Theory of Physics}}.
\newblock Wolfram Media, Incorporated, 2020.

\end{thebibliography}
\end{document}